\documentclass[11pt]{article}
\usepackage[margin=1in]{geometry}
\usepackage{amsmath,amssymb,amsthm}
\usepackage{graphicx}
\usepackage{booktabs}
\usepackage{xcolor}
\usepackage[colorlinks=true,linkcolor=blue!50!black,citecolor=blue!50!black,urlcolor=blue!50!black]{hyperref}
\usepackage[numbers]{natbib}

\newtheorem{theorem}{Theorem}
\newtheorem{lemma}{Lemma}
\newtheorem{corollary}{Corollary}
\theoremstyle{remark}
\newtheorem{remark}{Remark}
\newtheorem{observation}{Observation}

\title{How fine a change can moments see?\\
A scale law for detecting distribution shift,\\
with a kernel calibration rule}
\author{Adel Kaleche\\\small Independent researcher, France \quad \url{https://kaleche.dev} \quad \texttt{kaleche@gmail.com}}
\date{}

\begin{document}
\maketitle

\begin{abstract}
Detecting that a stream of high-dimensional embeddings has changed is usually framed as a choice
of statistic. We give a scale law that constrains any moment-based choice, and we test what it
implies against topological alternatives. The law: certifying a feature of spatial scale
$\varepsilon$ carrying mass fraction $f$ requires polynomial tests of degree
$N^*\ge\log(1/f)/(2\varepsilon)$ --- proved via the Chebyshev extremal problem --- while a
Gauss-quadrature construction gives $N^*\ge4b-1$ for a $b$-scale topology, so the cost is governed
by feature \emph{fineness}, not by feature count. The law is one-sided: it never certifies that a
given order suffices, and we exhibit a counterexample (an annulus whose mean, covariance and
\emph{all} fourth-order moments equal those of a filled disk, yet $H_1\neq0$).

Its practical content is a calibration rule. The upper-bound construction attains its rate with
Gaussian test functions --- the RKHS witness of an RBF kernel --- so the law predicts \emph{which}
bandwidth an MMD test should use, namely the feature scale. On controlled deformations of real
embedding streams we measure $\sigma^\star/\varepsilon$ with median $1.12$ (interquartile range $1.01$--$1.52$,
$n=26$) over three settings and three scales under one protocol, and a bandwidth predicted from a data-driven scale estimate reaches
AUC $\ge0.95$.
Against an adversary optimised against the defender's own statistics $\{\mu,\Sigma,k\text{-NN},b\}$,
every one of those statistics is evaded and only a bandwidth-matched kernel test still detects.

For persistent homology the verdict is mixed and depends strongly on choices that are usually left
implicit. The summary matters more than the filtration: total persistence attains recall $0.75$ at
$\mathrm{FPR}=1\%$ on a covariance-preserving attack where the first persistence landscape attains
$0.00$; DTM weighting does not help. Once the summary is chosen sensibly, several findings we would otherwise have reported ---
degradation with ambient dimension, sensitivity to the reduction dimension, statistically
significant domination by kurtosis, failure to alarm in the streaming regime --- turn out to be
properties of the landscape rather than of persistence. What survives is a cost gap, not a power
gap: where persistence works it costs $116\times$ kurtosis, which works at least as well; where
kurtosis fails against an adaptive adversary, persistence fails with it.
We therefore do not conclude that topological summaries are useless, but that on this task they are
dominated by a kernel test whose bandwidth the law tells you how to set.
\end{abstract}

\section{Introduction}
Deciding that a stream of high-dimensional embeddings has changed is usually framed as a choice of
statistic: covariance drift, a density estimate, a two-sample test, a topological summary. We argue
that the choice is constrained by a quantity that is rarely made explicit --- the spatial
\emph{scale} of the change one hopes to catch --- and we give a law that says how the cost of
detection depends on it. The law bounds below the polynomial degree, hence the moment order, needed
to certify a feature of scale $\varepsilon$ carrying mass fraction $f$; its constructive side
identifies Gaussian test functions as the probes that attain the rate, which turns it into a
calibration rule for the bandwidth of a kernel two-sample test.

Embeddings make a natural testbed, and they come with a standing hypothesis worth testing against
the law. They concentrate near a low-dimensional submanifold of $\mathbb{R}^D$ (the \emph{manifold
hypothesis}), so attacks and drift are expected to manifest as \emph{topological} deformations ---
components merging, loops filling --- that a topological monitor could catch and purely local
measures would miss; persistent homology is the natural candidate, and a growing body of work
applies it to out-of-distribution and adversarial detection
\citep{carlsson2009topology,naitzat2020topology}. We subject that hypothesis to a deliberately
adversarial evaluation. The answer is not a clean negative: it depends on choices --- above all the
persistence summary --- that are usually left implicit, and what survives is a cost gap explained by
the scale law rather than a failure of topology.
Our contributions:
\begin{enumerate}
\item \textbf{A scale law} (\S\ref{sec:theory}): $N^*\ge\log(1/f)/(2\varepsilon)$ for polynomial
tests, proved; $N^*\ge4b-1$ for a $b$-scale topology, proved by Gauss quadrature; an upper bound
$C_d/(\varepsilon f)$ under an explicit margin hypothesis. Cost is set by fineness, not count.
\item \textbf{An explicit statement of what the law cannot do} (\S\ref{sec:false}): it is a lower
bound and certifies no sufficiency. The annulus counterexample matches all moments through order
four while carrying $H_1$, so no ``fourth-order information governs canonical topological change''
statement is available.
\item \textbf{A calibration rule with a measured constant} (\S\ref{sec:sigma}):
$\sigma^\star\approx\varepsilon$ for RBF-MMD (median $1.12$, IQR $[1.01,1.52]$, $n=26$ under one
protocol over three settings and three scales), together with the finding that the argmax estimator
is high-variance so that single-run ratios must not be over-read; a data-driven scale estimate
closes the loop (with an explicitly stated compensation of biases), and we find no resolution floor
down to $0.28$ of the inter-point spacing.
\item \textbf{A hardened empirical comparison} (\S\ref{sec:emp}) --- three embedders, two corpora,
an adversary optimised against the defender's statistics, operating points at fixed FPR, streaming
delay at fixed ARL, cost table, four persistence summaries, two filtrations, a reduction-dimension
sweep --- reported with the settings that favour persistence as well as those that do not.
\end{enumerate}

\paragraph{Reproducibility.} All experiments are short, self-contained Python scripts (numpy,
ripser, scikit-learn, sentence-transformers); every number quoted below is produced by one of them.
Code and the exact scripts behind each table are archived at \href{https://doi.org/10.5281/zenodo.21649324}{doi:10.5281/zenodo.21649324} (live repository: \url{https://github.com/Adelagric/moment-scale-law}).

\paragraph{On repeated cells.} Tables produced by different experiments re-draw their windows
independently, so a nominally identical cell can differ between tables by sampling variation. The
first-landscape AUC on the covariance-preserving attack at ambient $1024$, reduction dimension $9$,
reads $0.64$ (Table~\ref{tab:dim}), $0.70$ (Table~\ref{tab:robust}) and $0.75$
(\S\ref{sec:summary}) across three such runs; total persistence reads $1.00$, $0.97$ and $0.99$.
The bootstrap interval reported in Table~\ref{tab:robust} --- $[0.53,0.85]$ for the landscape ---
covers this spread, and we quote each table's own run rather than silently propagating one number. Where a measurement contradicted an earlier claim of ours, we report the
correction explicitly rather than the original claim.
We view the result as a methodological guardrail: before proposing a topological feature, test
whether it is redundant with a low-order moment fixed by the geometry of the target effect.

\section{Setup and related work}
\textbf{Pipeline.} Unit-normalised embeddings $x_t\in\mathbb{R}^D$ are cut into sliding windows
$\mathbb{X}_t$ of size $W$ (all experiments use $W=200$ unless stated). We reduce each window to its intrinsic dimension by PCA before any
topological computation --- a step supported by the null-model analysis of \S\ref{sec:reduce} (raw high-$D$
Vietoris--Rips is dominated by distance concentration), though not by the detection task itself
--- see \S\ref{sec:op}. On the reduced cloud we compute
the Vietoris--Rips diagram, its first persistence landscape $\lambda$ \citep{bubenik2015landscapes},
and a scalar $\Delta_{\mathrm{top}}=\lVert\lambda-\bar\lambda_{\mathrm{ref}}\rVert_{L^1}$.

\textbf{Two-sample testing and drift detection.} Kernel two-sample tests \citep{gretton2012kernel},
energy distance \citep{szekely2013energy} and their learned-kernel successors
\citep{liu2020deepkernel} are the standard tools for detecting distribution shift; most directly,
\citet{rabanser2019failing} benchmark exactly our setting --- shift detection on learned
representations, with dimensionality reduction upstream of a two-sample test --- and find
two-sample testing on reduced representations to be the strongest family. Our contribution
relative to that literature is not a new detector but a \emph{scale law} for the one that works,
plus the comparison against a topological alternative.

\textbf{Classical baselines.} covariance drift $\lVert\Sigma-\Sigma_{\mathrm{ref}}\rVert_F$, written \textsc{cov-drift} below;
Mahalanobis shape; $k$-NN density (related to LID; \citep{ma2018lid}); Mardia kurtosis
$b=E[(x-\mu)^\top\Sigma^{-1}(x-\mu)]^2$ \citep{mardia1970kurtosis}.

\textbf{Related work.} VR/\v{C}ech stability \citep{cohensteiner2007stability,chazal2009gromov};
DTM and robust/sparse persistence
\citep{chazal2011geometric,anai2020dtm,buchet2016efficient,sheehy2013linear}; landscapes and
stable vectorisations \citep{bubenik2015landscapes,adams2017images}; witness complexes
\citep{desilva2004witness}. TDA has been applied to detection and network analysis:
\citet{gebhart2017adversary} detect adversarial inputs from the persistent homology of activation
graphs (98\% on MNIST), \citet{naitzat2020topology} characterise how depth simplifies the topology
of decision regions, and \citet{ballester2023survey} survey the area. These works, however,
(i) operate on \emph{activation graphs} rather than embedding point clouds, (ii) evaluate against
non-adaptive adversaries, and (iii) compare to accuracy or cosine/softmax baselines rather than a
\emph{higher-moment} or \emph{density} baseline. Our evaluation closes exactly that gap --- an adversary optimised against the defender's own
statistics, and a battery spanning covariance, Mahalanobis, $k$-NN density, kurtosis, a radial
Kolmogorov--Smirnov test, energy distance and MMD-RBF --- and supplies the
closed-form reason why the comparison turns out as it does.

\section{A moment-order law for topological detection}\label{sec:theory}
Let $X=R\cdot U$ in $\mathbb{R}^d$, $U$ uniform on $S^{d-1}$, $R\ge 0$ independent of $U$. The
radial law encodes the topology: $R\equiv 1$ is a shell (carries $H_{d-1}$); $R$ with density
$\propto r^{d-1}$ is a filled ball.

\begin{lemma}[blindness of second order]
$E[U]=0$, $E[UU^\top]=\tfrac1d I$, hence $E[X]=0$ and $\operatorname{Cov}(X)=\tfrac{E[R^2]}{d}I$.
Mean and covariance depend only on $E[R^2]$.
\end{lemma}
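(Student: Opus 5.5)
The plan is to use the symmetries of the uniform law on $S^{d-1}$ and then pass to $X$ through the independence of $R$ and $U$. Two invariances are needed. First, $U \stackrel{d}{=} -U$, because $x\mapsto -x$ is an orthogonal map and the uniform measure on the sphere is invariant under every orthogonal map. Second, $U \stackrel{d}{=} QU$ for any orthogonal $Q$. From the reflection invariance alone we get $E[U]=E[-U]=-E[U]$, so $E[U]=0$. Since $\|U\|=1$, every moment of $U$ is finite, so there are no integrability questions at this stage.

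For the second moment, let $M=E[UU^\top]$. Orthogonal invariance gives $QMQ^\top = E[(QU)(QU)^\top]=M$ for every orthogonal $Q$. I would not appeal to Schur's lemma; instead I would use two concrete families of $Q$.
\begin{itemize}
\item Take $Q$ to be the coordinate reflection $x_i\mapsto -x_i$. This gives $M_{ij}=-M_{ij}$ for $j\neq i$, so $M$ is diagonal.
\item Take $Q$ to be the transposition of coordinates $i$ and $j$. This gives $M_{ii}=M_{jj}$.
\end{itemize}
So $M=cI$. Taking the trace, $\operatorname{tr}M=E\|U\|^2=1$, which forces $c=1/d$.

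Next comes the transfer to $X=RU$. By independence, $E[X]=E[R]\,E[U]=0$ and $E[XX^\top]=E[R^2]\,E[UU^\top]=\tfrac{E[R^2]}{d}I$. Because the mean is zero, the covariance equals $E[XX^\top]$, which gives the claimed formula. Mean and covariance therefore depend on the radial law only through the single scalar $E[R^2]$. For instance, a shell and a filled ball with matched $E[R^2]$ are indistinguishable at second order.

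The only genuine obstacle is integrability. For the statements to make sense we need $E[R^2]<\infty$, and I would state this as a standing assumption; it also guarantees that $E[R]<\infty$. Once that is in place, the factorisations $E[RU]=E[R]E[U]$ and $E[R^2UU^\top]=E[R^2]E[UU^\top]$ are justified by Fubini applied to the product measure. No other step goes beyond these symmetry arguments.
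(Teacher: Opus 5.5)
Your proof is correct and complete. The paper states this lemma without proof, as a standard fact, and your argument is the routine verification it omits:
\begin{itemize}
\item reflection invariance gives $E[U]=0$;
\item coordinate reflections and transpositions force $E[UU^\top]=cI$;
\item the trace identity $\operatorname{tr}E[UU^\top]=E\lVert U\rVert^2=1$ gives $c=1/d$;
\item independence plus Fubini pass the result to $X=RU$.
\end{itemize}
You are right to make $E[R^2]<\infty$ an explicit standing assumption, since the paper uses it silently. Its subsequent Mardia computation also tacitly needs $E[R^2]>0$ and $E[R^4]<\infty$.
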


With $\Sigma^{-1}=\tfrac{d}{E[R^2]}I$ and $c=E[R^2]$,
\[
b_{\mathrm{Mardia}}=E[(X^\top\Sigma^{-1}X)^2]=\frac{d^2}{c^2}E[R^4]
=d^2\Big(1+\frac{\operatorname{Var}(R^2)}{c^2}\Big).
\]

\begin{remark}[minimality of the shell; Jensen]
At fixed $E[R^2]$, among spherically symmetric clouds the degenerate shell $R\equiv\sqrt c$
minimises the fourth moment, with $\operatorname{Var}(R^2)=0$, $b=d^2$. Along the one-parameter
family interpolating shell $\to$ filled ball, $b$ increases strictly with
$\operatorname{Var}(R^2)$. Hence mean and covariance are structurally blind to the radial shape
(hence to $H_{d-1}$), while the fourth moment is monotone along this family --- with a mechanism
independent of the homological dimension $d-1$.
\end{remark}

Numerically: shell $b=d^2$ predicts $4,9$ ($d=2,3$), measured $4.04,8.99$; for the unit ball
$E[R^2]=\tfrac{d}{d+2}$, $E[R^4]=\tfrac{d}{d+4}$ give $b=\tfrac{16}{3}=5.33$ and
$\tfrac{75}{7}=10.71$, measured $5.28,10.75$.

\paragraph{The converse is false: $b$ does not determine the topology.}
It is tempting to read the above as ``persistence and kurtosis are two readouts of one
quantity''. That is wrong, and the counterexample is elementary. Any annulus of nonzero thickness
carries $H_{d-1}$, and $b$ depends only on the scalar
$\operatorname{Var}(R^2)/c^2$, so the map topology $\to b$ is not injective. Concretely in $d=2$, take the radial law $R\sim\mathrm{Unif}[a,1]$ (planar density
$\propto1/r$ on the annulus, not uniform on the region):
\[
\frac{E[R^4]}{E[R^2]^2}=\frac95\cdot\frac{1+a+a^2+a^3+a^4}{(1+a+a^2)^2},
\]
which equals the filled-disk value $4/3$ at $a^\ast=0.2954$. That annulus has \emph{exactly} the
mean, covariance and Mardia kurtosis of the filled disk, yet $H_1\neq0$ (on $n=1200$ points per cloud: maximal
$H_1$ persistence $0.455$ vs.\ $0.118$; $b=5.26$ vs.\ $5.28$; $\lvert\Delta\Sigma\rvert_F=0.04$,
all at the sampling floor). It is a coarse, single-scale hole --- precisely the regime where our
empirical study finds kurtosis sufficient. The correct statement is therefore \emph{monotonicity
along the shell--ball family}, not equivalence: our empirical redundancy is a statement about the
attacks we generate (which move along such families), not a theorem that kurtosis sees every hole.

\begin{corollary}[elliptical extension]
Mardia kurtosis is affine-invariant ($b(MX)=b(X)$), and an invertible affine map is a
homeomorphism, so it preserves \emph{homotopy type} --- though \emph{not} the Vietoris--Rips
filtration, which depends on the metric. The moment side of the statement therefore transfers
verbatim to every elliptically symmetric family, with $R$ the Mahalanobis radius, and the
topological side transfers provided persistence is computed after whitening (our pipeline does not whiten): a shell has $b=d^2$ regardless of anisotropy (measured
$4.04,9.00$ under
an affine map of condition number $3.7,6.3$). This addresses the ``real clusters are not
spherical'' objection.
\end{corollary}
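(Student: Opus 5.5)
The plan is to establish the two halves of the corollary separately: the moment side, which is algebraic, and the topological side, which is a statement about homeomorphisms and metrics.

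\textbf{Affine invariance of $b$.} Let $Y=MX+v$ with $M$ invertible. Then $\mu_Y=M\mu_X+v$, so $Y-\mu_Y=M(X-\mu_X)$, and $\Sigma_Y=M\Sigma_X M^\top$. Substituting,
\[
(Y-\mu_Y)^\top\Sigma_Y^{-1}(Y-\mu_Y)=(X-\mu_X)^\top M^\top M^{-\top}\Sigma_X^{-1}M^{-1}M(X-\mu_X)=(X-\mu_X)^\top\Sigma_X^{-1}(X-\mu_X).
\]
The squared Mahalanobis radius is therefore invariant pointwise, and so is its second moment $b$. Neither the symmetry of $X$ nor its topology enters this step; it is pure linear algebra.

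\textbf{Transfer to elliptical families.} An elliptically symmetric $Y$ can be written $Y=\mu+AX$ with $X=RU$ spherical as in \S\ref{sec:theory}. With $\Sigma_Y=A\,\tfrac{E[R^2]}{d}A^\top$, the Mahalanobis radius of $Y$ equals $\sqrt{d/E[R^2]}\,R$. This follows from the invariance identity above, so $R$ is exactly the Mahalanobis radius up to the normalising constant. Hence every statement of the blindness lemma and the minimality remark carries over verbatim:
\begin{itemize}
\item mean and covariance determine only $E[R^2]$ and the fixed shape $A$;
\item $b=d^2(1+\operatorname{Var}(R^2)/c^2)$;
\item the shell attains $b=d^2$ regardless of $A$.
\end{itemize}

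\textbf{Topological side.} The map $x\mapsto Mx+v$ is continuous with continuous inverse $y\mapsto M^{-1}(y-v)$. It is therefore a homeomorphism of $\mathbb{R}^d$, sending the support of $X$ onto the support of $Y$ and preserving homotopy type, hence homology, e.g.\ $H_{d-1}$ of a shell. To see that the Vietoris--Rips filtration is not preserved, I will give a one-line example: a nonconformal $M$ changes which pairwise distances fall below a threshold, so it changes the birth and death values in the diagram.

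The positive transfer statement follows by computing persistence on the whitened cloud $\Sigma_Y^{-1/2}(Y-\mu_Y)$. Whitening yields an orthogonal image $Q\sqrt{E[R^2]/d}^{\,-1}X$ of the spherical cloud, with $Q$ orthogonal. Since Rips diagrams are invariant under isometries and scale linearly under dilations, the diagram is that of the spherical case up to a known rescaling. The main subtlety is this last point: whitening recovers $X$ only up to an orthogonal factor and a scale, so the orthogonal ambiguity must be shown to be harmless. Isometry invariance of VR settles it. The quoted numerics ($4.04$, $9.00$) serve only as illustration and are not part of the proof.
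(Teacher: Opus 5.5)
Your proposal is correct and follows the same route as the paper, which justifies the corollary inline with exactly these ingredients: invariance of the Mahalanobis quadratic form under invertible affine maps, the fact that such maps are homeomorphisms (so they preserve homotopy type but not Rips filtration values), and whitening before computing persistence. You make explicit what the paper leaves implicit, namely the normalisation $\sqrt{d/E[R^2]}\,R$ of the Mahalanobis radius and the orthogonality of $(AA^\top)^{-1/2}A$, after which isometry invariance of Vietoris--Rips finishes the topological side.
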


\begin{figure}[t]\centering
\includegraphics[width=0.72\textwidth]{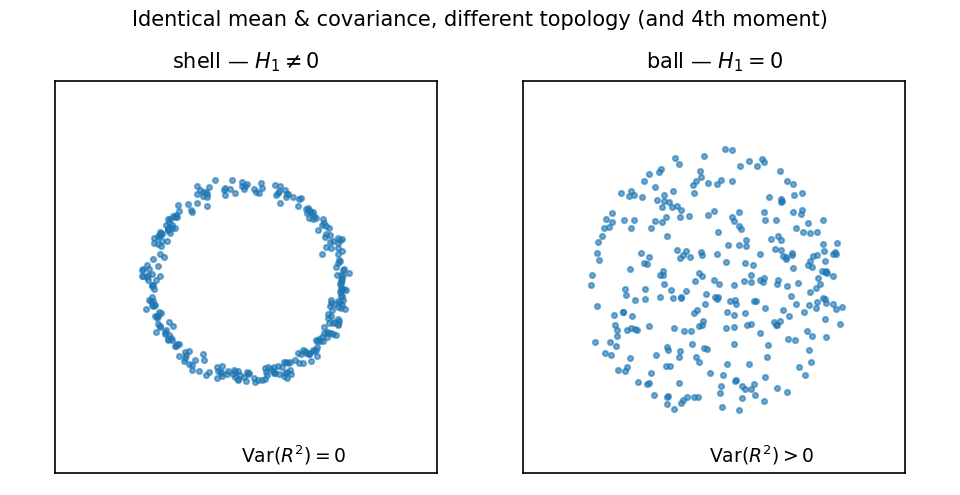}
\caption{Shell vs.\ ball with identical mean and covariance: the topology (and the fourth
moment, via $\operatorname{Var}(R^2)$) differ, while first- and second-order statistics do not.}
\label{fig:shell}
\end{figure}

\textbf{Trade-off curve and the geometry law.} Starting from a broken loop and matching a growing
statistic set, $H_1$ stays hidden under $\{\mu,\Sigma,k\text{-NN density}\}$
($H_1/h^*\approx0.07$) and jumps at kurtosis ($0.77$; Fig.~\ref{fig:tradeoff}). Across the
sphere/ball family, $H_1$ and $H_2$ both threshold at the fourth moment; $H_0$ thresholds at
covariance when separation raises variance, but at kurtosis once clusters are variance-matched.
Empirically, then, the order at which the topology is forced tracks the lowest moment the
geometry of the change perturbs rather than the homological dimension. We state this as an
observation about these families, not as a law: the bound is one-sided and cannot certify
sufficiency.

\begin{observation}[the shell--ball family, across homological dimension]
Along the shell$\to$ball family at fixed $E[R^2]$, the fourth moment is monotone, and the same
holds for $H_0$: for a symmetric two-component mixture in $\mathbb{R}^d$ with masses at $\pm a$
along one axis and within-component variance $w^2$, $E[X_1^2]=a^2+w^2$ and
$E[X_1^4]=a^4+6a^2w^2+3w^4$, so with $t=a^2$, $s=w^2$ the standardised
fourth moment along the separation axis is $\kappa_1=3-2t^2/(t+s)^2<3$ for all $a>0$. For
independent standardised coordinates $b_{\mathrm{Mardia}}=\sum_i\kappa_i+d(d-1)$, hence
\[
b \;=\; d(d+2)\;-\;\frac{2t^2}{(t+s)^2}\;<\;d(d+2),
\]
so the bimodal cloud ($H_0=2$) is strictly platykurtic --- a closed form, not a measurement
(numerically at $d=3$: $b=13.59$). The comparison cloud we use is Gaussian, for which
$b=d(d+2)=15$ exactly (measured $15.02$); a non-Gaussian unimodal control would make the
contrast sharper. The mechanism is thus the same for $H_0$, $H_1$ and $H_2$, and independent of the
homological dimension.

\emph{This is monotonicity along a family, not a characterisation.} The next paragraph shows the
converse fails outright, so no statement of the form ``fourth-order information governs canonical
topological change'' is available --- in spherical symmetry $E[R^4]$ \emph{is} the entire
fourth-order information, and our counterexample matches it exactly.
\end{observation}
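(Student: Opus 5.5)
The plan is to prove the three ingredients of the Observation separately: the shell--ball monotonicity, the two-component moment identities, and the assembly of Mardia's $b$ from coordinate kurtoses.

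\textbf{Shell--ball side.} This reuses the Remark. By the Lemma, $b=d^2E[R^4]/c^2=d^2\bigl(1+\operatorname{Var}(R^2)/c^2\bigr)$, so at fixed $c=E[R^2]$ it suffices to show that $\operatorname{Var}(R^2)=E[R^4]-c^2$ is strictly monotone along the chosen interpolation. I would parametrise the family explicitly, for instance by the mixture $R_\theta$ that equals $\sqrt c$ with probability $1-\theta$ and follows the rescaled ball law with probability $\theta$. Then $E[R_\theta^4]=(1-\theta)c^2+\theta E[R_{\mathrm{ball}}^4]$ is affine in $\theta$ with positive slope, because Jensen gives $E[R^4_{\mathrm{ball}}]>c^2$ strictly (the ball's $R^2$ is non-degenerate).

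\textbf{Two-component mixture.} I would write $X_1=\pm a+wZ$, with a fair sign and $Z$ standardised, symmetric, and with $E[Z^4]=3$, i.e.\ Gaussian components, as the closed form implicitly assumes. Expanding binomially, the odd terms vanish, which gives $E[X_1^2]=a^2+w^2$ and $E[X_1^4]=a^4+6a^2w^2+3w^4$. With $t=a^2$ and $s=w^2$,
\[
\kappa_1=\frac{t^2+6ts+3s^2}{(t+s)^2}=3-\frac{2t^2}{(t+s)^2},
\]
and this is strictly less than $3$ whenever $t>0$.

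\textbf{Assembling $b$.} After standardising each coordinate, $\Sigma^{-1}$ becomes the identity (the coordinates are uncorrelated). Then
\[
b=E\Bigl[\bigl(\textstyle\sum_i Y_i^2\bigr)^2\Bigr]=\sum_i E[Y_i^4]+\sum_{i\neq j}E[Y_i^2Y_j^2].
\]
Under independence the cross terms factor to $1$, giving $b=\sum_i\kappa_i+d(d-1)$. The other $d-1$ coordinates are within-component Gaussian noise, with $\kappa_i=3$. Summing yields $b=d(d+2)-2t^2/(t+s)^2$. The Gaussian comparison value $d(d+2)$ is the case $t=0$.

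\textbf{Main obstacle.} The hard part is the independence claim. The mixture's coordinates are not independent in general, because the component label couples them. I would check that it suffices when the non-separating coordinates share the same within-component law in both components. In that case $X_2,\dots,X_d$ are independent of the label and of $X_1$, so the factorisation holds. I would state this as the precise hypothesis.
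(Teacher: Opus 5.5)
Your derivation of $\kappa_1$ and the assembly $b=\sum_i\kappa_i+d(d-1)$ follow the paper's route; the paper simply asserts independence at that step. Reading the shell--ball half off $b=d^2(1+\operatorname{Var}(R^2)/c^2)$ is what the Remark does.

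\textbf{The gap.} The step that fails is your proposed sufficient condition for independence. If the law of $(X_2,\dots,X_d)$ is the same in both components, those coordinates are independent of the label, but not of $X_1$, and the closed form can then be false. Take $d=2$ and Gaussian components centred at $\pm ae_1$, with unit within-component variances and within-component correlation $\rho$ between $X_1$ and $X_2$. Then $X_2\sim N(0,1)$ in both components, so your hypothesis holds. Yet $\Sigma$ is not diagonal. Whitening and rotating (Mardia's $b$ is affine invariant, by the paper's Corollary) reduces this to the isotropic case with $t/s$ replaced by the Mahalanobis separation $a^2/(1-\rho^2)$. For $a=1$, $\rho=0.9$ this gives $b\approx6.59$, not the $7.5$ your formula gives.

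\textbf{The fix.} You need a product law within each component. Concretely, take components $N(\pm ae_1,w^2I_d)$, which is what ``within-component variance $w^2$'' and your own ``within-component Gaussian noise'' intend. Then $X=aLe_1+wZ$ with $Z\sim N(0,I_d)$ independent of the fair sign $L$. So $X_1$ depends only on $(L,Z_1)$, and $X_2,\dots,X_d$ are independent of it and of one another. That is all that diagonal $\Sigma$ and the cross-term factorisation require. Alternatively, affine invariance extends the formula to any common within-component covariance, with $t/s$ read as the Mahalanobis separation $a^2(\Sigma_w^{-1})_{11}$.

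\textbf{A smaller point on the shell--ball side.} The paper never fixes the interpolating family; it only records that $b$ is increasing in $\operatorname{Var}(R^2)$. Your shell/ball mixture makes monotonicity automatic (affine in $\theta$), but its support is the whole ball for every $\theta>0$. For the natural thickening annulus $\{r_0\le\lVert x\rVert\le\rho\}$, rescaled to fixed $E[R^2]$, the claim needs a short computation. In $d=2$, for instance, $E[R^4]/E[R^2]^2=\tfrac43\bigl(1-x/(1+x)^2\bigr)$ with $x=(r_0/\rho)^2$, which is monotone on $[0,1]$.
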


\begin{theorem}[complexity--moment lower bound; proved for all $b$, all $d\ge2$]
For every $b\ge1$ there exist probability measures $\mu_b,\nu$ on $\mathbb{R}^d$ with
$\int p\,d\mu_b=\int p\,d\nu$ for all polynomials $p$ of degree $\le 4b-2$, yet
$\beta_{d-1}(\operatorname{supp}\mu_b)=b$ and $\beta_{d-1}(\operatorname{supp}\nu)=0$. Hence
$N^*(b)\ge 4b-1$: the moment order needed to distinguish a topology of complexity $b$ from a
trivial one grows at least linearly in $b$.
\end{theorem}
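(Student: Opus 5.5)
The plan is to reduce to a one-dimensional radial problem through the spherical symmetry already used in Lemma 1, and then handle that radial problem with Gauss quadrature. Take $X=R\cdot U$ with $U$ uniform on $S^{d-1}$ and $R$ independent of $U$. For such a rotation-invariant law, the integral of a polynomial $p$ of degree $\le n$ equals the integral of its spherical average $\bar p(r)=\int_{S^{d-1}}p(ru)\,d\sigma(u)$. Only even powers of $u$ survive the average, so $\bar p(r)$ is a polynomial in $r^2$ of degree $\le\lfloor n/2\rfloor$. Consequently two such measures agree on all polynomials of degree $\le 4b-2$ as soon as their radial laws satisfy
\[
E[R^{2k}]=E[R'^{2k}]\qquad\text{for }k=0,\dots,2b-1.
\]
Writing $s=R^2$, this is equality of the moments $1,s,\dots,s^{2b-1}$, so moment matching is needed only up to degree $2b-1$ in the variable $s$.

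Next come the two measures. Let $\nu$ have radial law with $s=R^2$ distributed according to some absolutely continuous law $\rho$ on $[0,1]$, for instance $R$ with density $\propto r^{d-1}$, which makes $\nu$ uniform on the unit ball. Then $\operatorname{supp}\nu$ is the closed ball, which is contractible, so $\beta_{d-1}=0$. Let $\mu_b$ be the rotation-invariant measure whose law of $s$ is the $b$-point Gauss quadrature rule for $\rho$. That rule has nodes $0<s_1<\dots<s_b<1$, all distinct and lying in the interior of the support, and positive weights. It is exact for polynomials in $s$ of degree $\le 2b-1$, which is precisely the matching condition above. The support of $\mu_b$ is then the disjoint union of $b$ concentric spheres of radii $\sqrt{s_i}$.

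The homology of this support is now direct. A disjoint union of $b$ copies of $S^{d-1}$ has $\beta_{d-1}=b$ when $d\ge3$. For $d=2$ the union consists of $b$ circles, and $\beta_1=b$ again. The problem of $\beta_0$ agreeing with $\beta_{d-1}$ in dimension $1$ arises only when $d=1$, which the hypothesis $d\ge2$ excludes. The conclusion $N^*(b)\ge 4b-1$ follows because no test of degree $\le 4b-2$ separates $\mu_b$ from $\nu$.

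The main obstacle is the quadrature step, that is, checking that Gauss nodes are distinct, lie strictly inside $(0,1)$, and carry positive weights with exactness up to degree $2b-1$. I would cite the standard theory of orthogonal polynomials for $\rho$: the zeros of the degree-$b$ orthogonal polynomial are simple and lie in the interior of the convex hull of the support. Positivity of the weights matters because $\mu_b$ must be a genuine probability measure. A secondary point is the parity argument for the spherical average. It uses the fact that odd moments of $U$ vanish, since $U\mapsto -U$ preserves the law, and that $\int u^\alpha\,d\sigma$ with all exponents even is homogeneous of degree $|\alpha|$ in $r$ after substituting $x=ru$.
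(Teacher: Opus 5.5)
Your proposal is correct and follows the paper's proof essentially step for step. The paper also averages over the sphere, which turns a polynomial of degree $\le 4b-2$ into a polynomial of degree $\le 2b-1$ in $u=\lVert x\rVert^2$. It then builds $\mu_b$ from the $b$-point Gauss quadrature of the radial law of $\nu$, which gives $b$ concentric $(d-1)$-spheres.

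Your extra checks fill in details the paper's sketch leaves implicit:
\begin{itemize}
\item the nodes lie strictly inside $(0,1)$, so every sphere has positive radius;
\item the weights are positive, so $\mu_b$ is a genuine probability measure;
\item $\rho$ is taken with full support, so $\operatorname{supp}\nu$ is the contractible closed ball.
\end{itemize}
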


\begin{proof}[Proof sketch]
Let $\nu$ be a rotation-invariant density on the unit ball (radial law $\rho$ on $u=\lVert
x\rVert^2$) and $\{(u_i,w_i)\}_{i=1}^b$ its $b$-point Gauss quadrature; set
$\mu_b=\sum_i w_i\,\sigma_{\sqrt{u_i}}$ ($b$ concentric $(d{-}1)$-spheres, $\beta_{d-1}=b$). For
$\deg p\le 4b-2$, the rotational average $\bar p$ is a polynomial in $u$ of degree $\le 2b-1$; by
rotation invariance $\int p\,d\mu=\int\bar p\,d\mu$, and Gauss exactness (degree $\le 2b-1$) gives
$\sum_i w_i\bar p(u_i)=\int\bar p\,d\rho$, i.e.\ $\int p\,d\mu_b=\int p\,d\nu$.
\end{proof}

In the radial case the bound is tight \emph{within the class of $b$-atom radial laws} (Hamburger
uniqueness fixes such a law by its first $2b$ moments), so $N^*=\Theta(b)$ there; the same holds for elliptical, axis-separable, and
group-symmetric topology by reduction. Persistence reads all $b$ features from one barcode --- a
bounded-computation surrogate for an unbounded-order moment (Fig.~\ref{fig:frontier}; verified
$m=1,2,3$, matched orders $2,6,10$). No fixed-order moment is redundant with persistence for rich
topology.

A matching \emph{upper} bound $N^*\le C_d\,b$ is, however, \textbf{false} as stated: a feature of
radius $\varepsilon$ carrying mass fraction $f$ perturbs moments by only $O(f\varepsilon^k)$, so
sufficiently fine topology escapes \emph{every} fixed moment order. Numerically (ring vs.\ disk of
radius $\varepsilon$, $f=0.12$, normalised moments up to order $10$), the relative moment
discrepancy collapses $5.6\!\cdot\!10^{-2}\to 2.2\!\cdot\!10^{-4}$ as $\varepsilon:0.5\to0.03$
while the $\varepsilon$-scale homology persists. Any upper bound must therefore be \emph{scale-relative}. We determine the scale law empirically:
in an orthonormal (tensor-Legendre) basis, the smallest order at which the coefficients of an
$\varepsilon$-ring and an $\varepsilon$-disk diverge obeys
\[
N^*(\varepsilon)\;\approx\;0.74\,\varepsilon^{-0.92},\qquad R_{\log\log}=0.983,
\]
consistent with the theoretical exponent $1$: order-$N$ moments resolve scale $\sim1/N$
(Christoffel--Darboux), so seeing an $\varepsilon$-feature costs $N\sim1/\varepsilon$.

Crucially, the cost is \emph{not} a product with the feature count: at fixed $\varepsilon$,
increasing $b$ from $1$ to $16$ raises $N^*$ only from $4$ to $8$ (empirical exponent $\approx
0.25$ in $d=2$), because a single degree-$N$ polynomial resolves scale $1/N$ \emph{uniformly},
hence all features at once. The $b$ in the lower bound is not a counting cost: packing $b$ shells
into the unit ball forces $\varepsilon\sim1/b$. This yields a matching upper bound with no $b$ at
all:

\begin{theorem}[scale-relative upper bound, under a margin hypothesis]\label{thm:upper}
Let $\mu,\nu$ be probability measures on $[-1,1]^d$ agreeing on all moments of degree $\le N$.
Assume the \emph{$f$-margin condition}: every ball of radius $\varepsilon$ has $\mu$- and
$\nu$-mass either $\ge 2f$ (``occupied'') or $\le f/2$ (``empty''). Then $\mu$ and $\nu$ induce the
same $\varepsilon$-occupancy pattern, hence the same nerve and the same $\check{\mathrm C}$ech
Betti numbers at radius $\varepsilon$, as soon as $N\ge C_d/(\varepsilon f)$.
\end{theorem}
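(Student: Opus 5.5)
The plan is to argue by contradiction through a single smoothed test function. Suppose some centre $c$ has $\mu(B(c,\varepsilon))\ge 2f$ but $\nu(B(c,\varepsilon))\le f/2$ (the reverse case is symmetric). I would build a polynomial $P$ of degree $\le N$ with $\int P\,d\mu-\int P\,d\nu$ bounded away from zero. Agreement of all moments of degree $\le N$ forbids this, so the two occupancy patterns must coincide. Equality of the nerves then follows formally: the nerve is built from which radius-$\varepsilon$ balls are occupied and how they intersect, and the intersection data depend only on the centres. The Čech Betti numbers therefore agree by the nerve lemma.

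\textbf{Step 1 (smoothing over centres).} A hard indicator $1_{B(c,\varepsilon)}$ cannot be approximated uniformly by polynomials. I would therefore replace it by an average over nearby centres,
\[
g_c(x)=\frac{1}{|B(0,\eta\varepsilon)|}\int_{B(c,\eta\varepsilon)}1_{B(c',\varepsilon)}(x)\,dc',
\]
with $\eta$ a small absolute constant. By Fubini, $\int g_c\,d\mu$ is the average of $\mu(B(c',\varepsilon))$ over $c'\in B(c,\eta\varepsilon)$, and the same holds for $\nu$. The function $g_c$ takes values in $[0,1]$ and is Lipschitz with constant $O_d(1/(\eta\varepsilon))$.

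\textbf{Step 2 (margin gives a gap).} This is where the single-radius margin hypothesis enters. Every $c'$ satisfies $\mu(B(c',\varepsilon))\in[0,f/2]\cup[2f,1]$, and likewise for $\nu$. Suppose the disagreement set $D=\{c':\ \mu\text{-occupied},\ \nu\text{-empty}\}$ occupies at least a fixed fraction $\theta$ of $B(c,\eta\varepsilon)$, and the reverse disagreement set occupies less than that. Then
\[
\int g_c\,d\mu-\int g_c\,d\nu\ \ge\ \theta\cdot\tfrac32 f-(\text{reverse contribution}).
\]
To keep this positive, I would choose $c$ to be a Lebesgue density point of $D$, so that $\theta\to1$ as $\eta$ shrinks. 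Agreeing centres contribute at most $\pm(1-\theta)$ times the mass difference, so the gap stays of order $f$.

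\textbf{Step 3 (polynomial approximation).} A multivariate Jackson theorem on $[-1,1]^d$ gives a polynomial $P$ of degree $N$ with $\|P-g_c\|_\infty\le C'_d/(N\eta\varepsilon)$. Requiring this error to be at most $f/8$ gives $N\ge C_d/(\varepsilon f)$, and the uniform error then cannot absorb the $\Omega(f)$ gap. This step is routine and is exactly where the claimed rate comes from.

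\textbf{Main obstacle.} Step 2 assumes that $D$ has positive measure, and this is where I expect the real difficulty. For open balls, $c'\mapsto\mu(B(c',\varepsilon))$ is lower semicontinuous. Combined with the margin gap, this makes the $\mu$-occupied set $\{\mu(B)>f/2\}$ open, so $D$ is an open set intersected with a closed one. $D$ could then be a Lebesgue-null boundary set, for instance a single centre where $\nu$ switches from empty to occupied. In that case the single-radius hypothesis genuinely cannot rule out disagreement at that centre. The first thing I would try is to show that every nonempty $D$ has nonempty interior, using the margin at nearby centres together with openness of both occupied sets. If that fails, I would prove the statement with ``same occupancy pattern'' read up to a Lebesgue-null set of centres. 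That reading suffices for the nerve: one perturbs the cover's centres generically within $B(c,\eta\varepsilon)$, which leaves the Čech nerve at radius $\varepsilon$ unchanged for all but finitely many critical configurations. I would state this convention explicitly in the proof rather than hide it.
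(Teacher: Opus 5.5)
The gap is in Step~2, and it costs you the rate, not just a corner case. Your bound on the agreeing centres is off by a factor $1/f$. Where both measures are occupied, $\mu(B(c',\varepsilon))-\nu(B(c',\varepsilon))$ can be as large as $1-2f$ in absolute value, and on the reverse disagreement set it can be as negative as $-1$. So $\int g_c\,d\mu-\int g_c\,d\nu$ is guaranteed positive only if all of $B(c,\eta\varepsilon)$, apart from a relative fraction $O(f)$, lies in $D$.

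You obtain that by taking $c$ a density point of $D$ and shrinking $\eta$, but how small $\eta$ must be then depends on $\mu$ and $\nu$. Since $N$ is fixed by hypothesis before $\eta$ is chosen, the Lipschitz constant $O_d(1/(\eta\varepsilon))$ in Step~3 and the requirement $N\gtrsim 1/(\eta\varepsilon f)$ are not uniform. The sentence ``this gives $N\ge C_d/(\varepsilon f)$'' silently treats $\eta$ as an absolute constant. Your fallback of reading occupancy up to a Lebesgue-null set of centres does not help either: the disagreement set can have positive but arbitrarily small measure.

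In fact no repair of Step~2 can work, because the single-radius statement fails for every $N$ once, say, $f<1/4$. Work in $d=1$; any $d$ is analogous, with an atom crossing a sphere.

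Let $\nu$ consist of an absolutely continuous part $\rho$ with $\int_B\rho\le f/4$ for every $\varepsilon$-ball $B$, and with $\rho$ bounded below on some interval $J$, plus atoms of mass $\ge 2f$. Then every ball has $\nu$-mass $\le f/4$ or $\ge 2f$. Place one atom at $c+\varepsilon+\tau$, with no other atom within $2\varepsilon$ of $c$.

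Let $\mu$ move that atom to $c+\varepsilon-\tau$ and add a signed density $\psi$ on $J$ that restores every moment of degree $\le N$. This is possible because the moment map $L^\infty(J)\to\mathbb{R}^{N+1}$ is onto, and it gives $\lVert\psi\rVert=O_N(\tau)$.

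For small $\tau$ we have $\mu\ge0$, both measures satisfy the margin, and all moments agree through degree $N$. Yet $B(c',\varepsilon)$ is $\mu$-occupied and $\nu$-empty for every $c'\in(c-\tau,c+\tau]$, and $c$ can be chosen to be a point of any prescribed net. So your instinct in the last paragraph, that the single-radius margin cannot rule out disagreement, was right, and it is decisive.

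The paper's sketch takes a different route. It uses a single plateau bump $\varphi_x\equiv1$ on $B(x,\varepsilon/2)$, supported in $B(x,\varepsilon)$, approximated by Jackson's theorem. It imposes the margin at both radii $\varepsilon/2$ and $\varepsilon$, and restricts the conclusion to the nerve of a fixed $\varepsilon$-net.

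The same example shows this also certifies less than the stated conclusion. A two-radius margin still allows $\mu(B(x,\varepsilon/2))\le f/2<2f\le\mu(B(x,\varepsilon))$. The Jackson step therefore controls only the smoothed label $\operatorname{sign}\bigl(\int\varphi_x\,d\mu-\tfrac32 f\bigr)$, and only where $\int\varphi_x\,d\mu$ stays away from $\tfrac32 f$.

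A provable version needs one of two things. Either state the margin for the smoothed masses $\int\varphi_x$ themselves, with occupancy defined by them. Or add a hypothesis controlling the mass near the sphere $\partial B(x,\varepsilon)$ at the net points.
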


The margin hypothesis is not cosmetic: occupancy is a thresholded quantity, so without a gap an
arbitrarily small mass discrepancy can flip one ball, hence the nerve, hence the Betti numbers ---
the statement is refutable already in $d=1$. ``Betti numbers at scale $\varepsilon$'' means those
of the $\check{\mathrm C}$ech complex at radius $\varepsilon$.

\begin{proof}[Proof sketch]
Fix $x$ and take $\varphi_x$ smooth with $\varphi_x\equiv1$ on $B(x,\varepsilon/2)$,
$\mathrm{supp}\,\varphi_x\subseteq B(x,\varepsilon)$, $0\le\varphi_x\le1$; then
$\omega(\varphi_x,\delta)\lesssim\delta/\varepsilon$. This pairing matters: a test function merely
supported in $B(x,\varepsilon)$ does not lower-bound $\int\varphi_x\,d\mu$ for an occupied ball,
since the mass could sit near the boundary where $\varphi_x$ vanishes. With $\varphi_x\equiv1$ on
the half-radius ball, $\mu(B(x,\varepsilon/2))\le\int\varphi_x\,d\mu\le\mu(B(x,\varepsilon))$.
By Jackson's theorem there is $p$, $\deg p\le N$, with
$\lVert\varphi_x-p\rVert_\infty\le C_d\,\omega(\varphi_x,1/N)\le C_d/(N\varepsilon)$; since
$\deg p\le N$ we have $\int p\,d\mu=\int p\,d\nu$, whence
$\lvert\int\varphi_x\,d\mu-\int\varphi_x\,d\nu\rvert\le2C_d/(N\varepsilon)$, which is $<f/2$ once
$N\ge C_d'/(\varepsilon f)$. Under the margin hypothesis stated on both radii, the sign of
$\int\varphi_x\,d\mu-\tfrac{3}{2}f$ therefore agrees for $\mu$ and $\nu$ at every $x$, so both
measures induce the same occupancy labelling of any fixed $\varepsilon$-net $\mathcal N$. The
conclusion is about the nerve of $\{B(x,\varepsilon)\}_{x\in\mathcal N}$ --- a fixed cover ---
and hence about its Betti numbers; relating that nerve to the \v{C}ech complex of the sample
requires a separate interleaving argument which we do not carry out, so we state the result for
the net's nerve only.
\end{proof}

Two honest caveats on this theorem. The margin hypothesis is \emph{assumed}, not derived from
``a feature of scale $\varepsilon$ and mass $f$''; it encodes a good part of the conclusion. And
the statement concerns the nerve of a fixed $\varepsilon$-net, not the \v{C}ech complex of the
sample.

The $1/f$ factor comes from first-order Jackson (a Lipschitz bump) and can be removed entirely by
choosing better test functions. Compactly supported Gevrey-$s$ profiles give
$\lVert\varphi_\varepsilon-p_N\rVert\lesssim\exp(-s(N\varepsilon/CA)^{1/s})$, hence
$N^*\lesssim\varepsilon^{-1}(\log 1/f)^{s}$; \emph{Gaussian} test functions do better, trading
compact support for analyticity, and numerically optimising their width yields
$N^*\propto(\log 1/f)^{1.04}$ versus $(\log 1/f)^{1.81}$ for the Gevrey family
($R\ge0.997$; at $f=10^{-6}$, $N^*=255$ vs.\ $1799$). This is optimal: by the Chebyshev extremal
problem, a degree-$N$ polynomial localised at scale $\varepsilon$ has contrast at most
$T_N\!\big(\tfrac{1+\varepsilon^2}{1-\varepsilon^2}\big)\approx e^{2N\varepsilon}$ (measured ratio
$\log(\text{contrast})/(N\varepsilon)\to1.99$), so distinguishing mass $f$ from $0$ by any \emph{polynomial} test of degree $\le N$ requires
\[
\boxed{\;N^*(\varepsilon,f)\;\ge\;\frac{\log(1/f)}{2\varepsilon}\;}
\]
This lower bound is proved, and its scope must be stated precisely: it constrains \emph{polynomial} (moment) tests of degree $\le N$, not all statistical tests. It is not an information-theoretic impossibility --- MMD is not a polynomial test, and \S\ref{sec:emp} shows it succeeds where moment tests fail. On the upper side we have a \emph{proved} bound $C_d/(\varepsilon f)$
(Theorem~\ref{thm:upper}) and \emph{numerical evidence} --- a fitted exponent, not a proof --- that Gaussian test
functions attain $O(\log(1/f)/\varepsilon)$, which would make the law tight. Closing this requires
bounding the Chebyshev coefficients of an $\varepsilon$-wide Gaussian via a Bernstein ellipse
argument; we have not carried it out, and we do not claim tightness.
The feature count enters only through packing: $b$ features in a bounded domain force
$\varepsilon,f\lesssim1/b$, giving $N^*\gtrsim b\log b$ --- sharper than the $4b-1$ obtained from
the explicit Gauss construction, and consistent with it.

The mechanism behind both bounds --- that order-$N$ moments resolve scale $1/N$, and that
smoothness converts this into super-algebraic accuracy --- is confirmed numerically over $14$
decades (Fig.~\ref{fig:asym}); we record it for the Gevrey family, whose exponent the Gaussian
construction above then improves on. Computing
Fourier coefficients of an $\varepsilon$-bump by FFT, a standard (Gevrey-$2$) mollifier satisfies
$\log|\hat c_k|\propto -(k\varepsilon)^{1/2}$ with correlation $-0.993$ (versus $-0.960$ for an
algebraic fit, whose fitted exponent $-7.3$ is itself a symptom of non-algebraic decay), while a
Lipschitz bump stays algebraic with the predicted exponent $-1.94\approx-2$. The measured slope in
$(k\varepsilon)^{1/2}$ is $-1.955$ and $-1.947$ at $\varepsilon=0.15,0.30$ --- essentially
identical, confirming that the decay depends on the product $k\varepsilon$ with a universal
constant $c\approx1.95$. Hence $\mathrm{err}(N)\approx\exp(-c(N\varepsilon)^{1/2})$ for Gevrey-$2$ test
functions --- the $s=2$ instance of the intermediate bound, with its constant measured --- which
the Gaussian family improves to the law boxed above.

\begin{figure}[t]\centering
\includegraphics[width=0.92\textwidth]{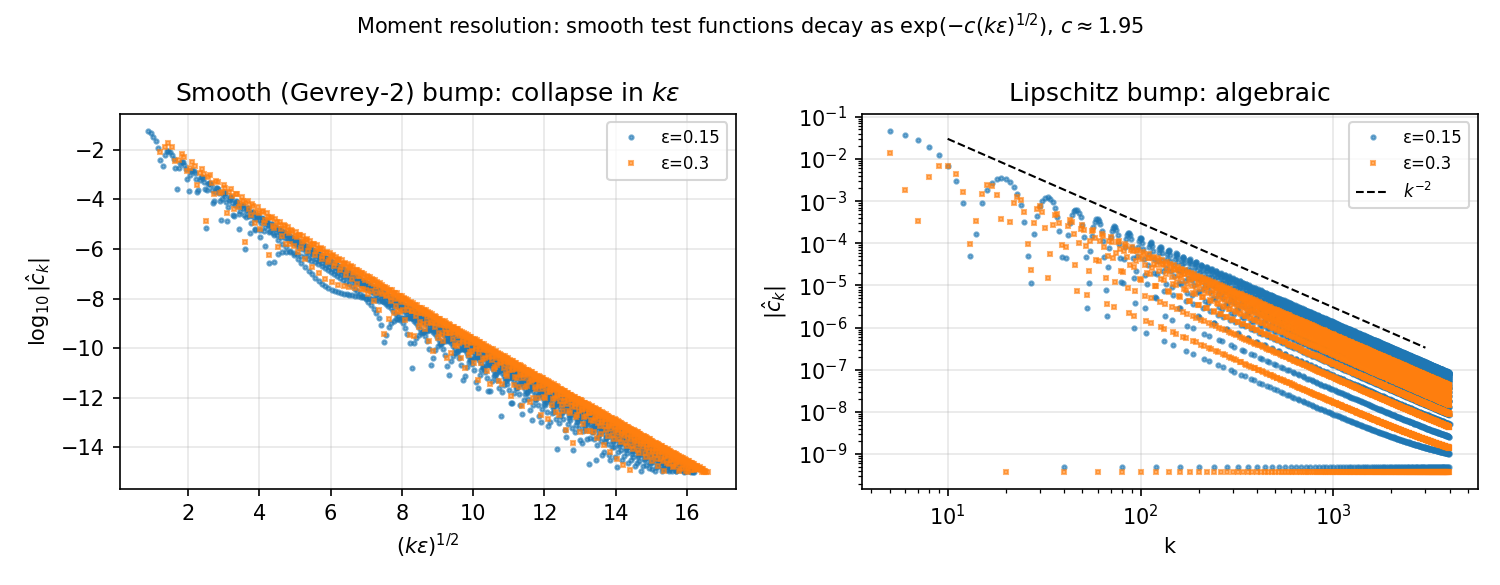}
\caption{Left: Fourier coefficients of a smooth (Gevrey-2) $\varepsilon$-bump collapse onto a
single line in $(k\varepsilon)^{1/2}$ across $\varepsilon$, i.e.\ $\exp(-c(k\varepsilon)^{1/2})$
with $c\approx1.95$. Right: a Lipschitz bump decays algebraically as $k^{-2}$. Order-$N$ moments
resolve scale $1/N$; smooth test functions convert that into super-algebraic accuracy, which is
what reduces the mass dependence from $1/f$ to $(\log 1/f)^2$.}
\label{fig:asym}
\end{figure}

The law is thus qualitatively clear: \textbf{$N^*$ is governed by feature
\emph{fineness} ($\varepsilon$, $f$), essentially not by feature \emph{count}}. This strengthens
the case for persistence: a barcode reads all scales at once, including fine features whose moment
cost diverges as $\varepsilon\to0$.

For $m=1$ the bound reads $N^*\gtrsim4$; this is consistent with (but does not imply) the
kurtosis redundancy measured in \S\ref{sec:emp}: the empirical
detection regime is exactly the simple-topology case $m=1$, where a low-order moment shadows
persistence. The genuine value of persistence is the large-$m$ regime --- rich multi-scale topology,
where the equivalent moment order is prohibitive. This delimits, rather than dismisses, topological
features: use them when the topology is complex, not for canonical single-scale changes.
The law is not an artifact of spherical symmetry: for $b$ loops placed along an axis at the
Gauss nodes of a contractible tube's marginal, the $x$-moments agree up to order $2b-1$ while
$H_1=b$ vs.\ $0$ (verified $b\le 4$), so $N^*\propto b$ holds for any \emph{axis-separable}
topology. Group-symmetric topology reduces to a lower-dimensional quotient where the same law applies
(a torus of revolution reduces to its $(\rho,z)$ profile, its tube cycle becoming a
circle-vs-disk / fourth-profile-moment feature), so the genuinely open case narrows to
\emph{asymmetric} entanglement. An adversarial search for a persistence-visible counterexample
turned up none: linked cycles share Betti numbers and are invisible to ordinary persistence;
measure-zero perturbations vanish as $N\to\infty$; and geometric complexity of a single feature
(a $k$-armed star loop, $\beta_1=1$, strongly varying radius) does not inflate $N^*$ --- at
matched covariance the fourth moment still detects its hole at AUC $1.0$ for $k$ up to $8$.
Consistently with the law above, what these probes vary --- arrangement, entanglement,
shape --- leaves $N^*$ unchanged, since none of them alters feature scale or mass. We conjecture
the law extends to arbitrary curved manifolds via a nerve (local-to-global) argument; a formal
reduction for entangled topology remains open.

\begin{figure}[t]\centering
\includegraphics[width=0.6\textwidth]{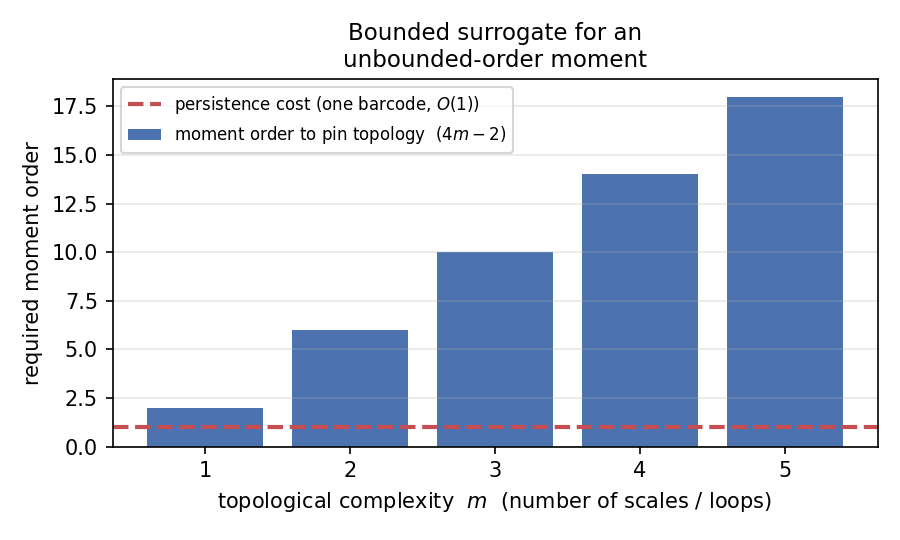}
\caption{Moment order $4m-2$ needed to pin an $m$-scale topology grows linearly, while persistence
reads all scales from a single barcode. Persistence is a bounded surrogate for an unbounded-order
moment.}
\label{fig:frontier}
\end{figure}

\begin{figure}[t]\centering
\includegraphics[width=0.62\textwidth]{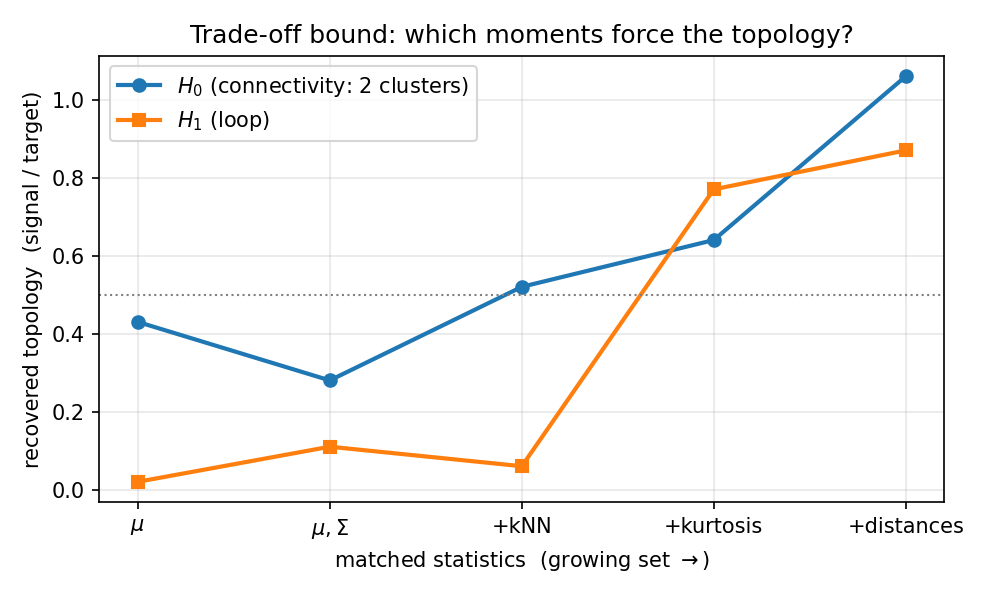}
\caption{Hidden topology (signal / target) vs.\ matched statistics. $H_1$ is hideable under
$\mu,\Sigma$ and local density, and is forced only at the fourth moment.}
\label{fig:tradeoff}
\end{figure}

\section{The strong conjecture is false (honest scope)}\label{sec:false}
One might conjecture that any homology change perturbs some low-order moment. This is false. By
gradient optimisation we drive a broken loop to match a circle's mean, covariance and full $k$-NN
distance distribution to numerical zero ($\lvert\Delta\Sigma\rvert=0$, $k$-NN RMSE
$2\times10^{-4}$) while keeping $H_1$ broken ($H_1/h^*=0.13$). PH thus carries information
orthogonal to covariance and local density --- there is no impossibility barrier. However, such
configurations are non-generic and require explicit optimisation, and leave a residual higher
moment (kurtosis AUC $=1.00$ even at $\lvert\Delta b\rvert=0.008$). The defensible claim is about
\emph{generic} changes.

\section{What the law predicts}\label{sec:predict}
Two consequences follow, and the rest of the paper tests them.

\textbf{(H1) Coarse, single-scale topology \emph{may} be redundant --- a heuristic conjecture,
not a consequence of the law.} Our law is a \emph{lower} bound: $N^*\ge\log(1/f)/(2\varepsilon)$
constrains from below and can never yield a statement of sufficiency such as ``order $4$ suffices''.
Worse, our own counterexample refutes sufficiency in exactly this regime: for the annulus of
\S\ref{sec:false} ($\varepsilon\approx0.3$, $f=\Theta(0.1)$) the bound gives $N^*\gtrsim3.8$, and
order $4$ \emph{fails} --- all moments through order four coincide with the filled disk while
$H_1\neq0$. So $N^*>4$ for a coarse, single-scale hole. What remains is the conjecture that for the
deformations arising in practice --- which move monotonically along the shell--ball family --- a
fourth moment does suffice. We test that conjecture rather than deduce it.

\textbf{(P2) The optimal probe is a Gaussian, i.e.\ a kernel two-sample test.} The upper-bound
construction attains its rate with Gaussian test functions of width matched to the feature scale.
A Gaussian test function integrated against $\mu-\nu$ is exactly the RKHS witness of an RBF
kernel, so the law predicts that MMD-RBF \emph{at bandwidth matched to the feature scale} is
near-optimal --- and, by the same token, that MMD at a mismatched bandwidth is weak. Notably, the
law does \emph{not} predict that persistence is optimal: a barcode is scale-free, which is an
advantage only when the relevant scale is unknown.

Section~\ref{sec:emp} tests H1 and P2 on real embedding streams under an adaptive adversary.

\paragraph{Order of discovery.} In fairness to the reader: H1 was pre-registered as the study's
hypothesis, whereas P2 was formulated \emph{after} the initial negative result and before the
kernel experiments reported in \S\ref{sec:kernel}. The narrative order of this paper is not its
chronological order.

\section{Testing the prediction on embedding streams}\label{sec:emp}
\textbf{Pre-registration.} Win/kill criteria were fixed before running: PH ``wins'' only if it
beats the best baseline at a production false-positive regime and not by inflating false
positives on legitimate topic shifts; the detection axis is falsified if PH beats neither
Mahalanobis nor covariance drift on any attack class.

\subsection{Higher homology exists only after reduction}\label{sec:reduce}
On real embeddings (MiniLM, 20\,Newsgroups), raw $384$-d $H_1$ is indistinguishable from a
covariance-matched Gaussian null (median bootstrap $p\approx0.95$); the intrinsic dimension
(TwoNN; \citep{facco2017twonn}) is $\approx 8$ while $160$ PCA axes are needed for $90\%$
variance --- a \emph{curved} low-dimensional manifold. After PCA to the intrinsic dimension,
multi-topic windows show a population-level $H_1$ signal above the null (median $p=0.03$;
sign-test $p=0.001$; confidence bands as in \citep{fasy2014confidence}). Reduction-before-TDA
is thus a first-order requirement.

\subsection{First pass: cheap statistics match or beat persistence}
On realistic attacks (off-manifold injection, diversity collapse) \textsc{cov-drift} reaches
AUC $1.00$ while $\Delta_{\mathrm{top}}$ is at or below chance. We then construct the attack most
favourable to PH: a diversity collapse followed by an affine correction restoring the pre-image's
\emph{exact} mean and covariance ($\lvert\Delta\Sigma\rvert=0$), so \textsc{cov-drift} is blind by construction.
Even so, the shape change is caught by Mahalanobis and by kurtosis (Table~\ref{tab:dim},
Fig.~\ref{fig:dim}).

\begin{table}[t]\centering
\caption{Covariance-preserving attack, bilateral AUC, $\lvert\Delta\Sigma\rvert=0$. The two
persistence columns share a diagram and differ only in the summary; the apparent degradation with
ambient dimension is a property of the landscape.}
\label{tab:dim}
\begin{tabular}{rcccccc}
\toprule
ambient & landscape & total pers. & \textsc{cov-drift} & Mahalanobis & $k$-NN & Kurtosis\\
\midrule
384  & 0.89 & \textbf{1.00} & 0.66 & 1.00 & 0.91 & \textbf{1.00}\\
768  & 0.77 & 0.93 & 0.66 & 0.98 & 0.98 & \textbf{1.00}\\
1024 & 0.64 & \textbf{1.00} & 0.54 & 1.00 & 0.98 & \textbf{1.00}\\
\bottomrule
\end{tabular}
\end{table}

\begin{figure}[t]\centering
\includegraphics[width=0.62\textwidth]{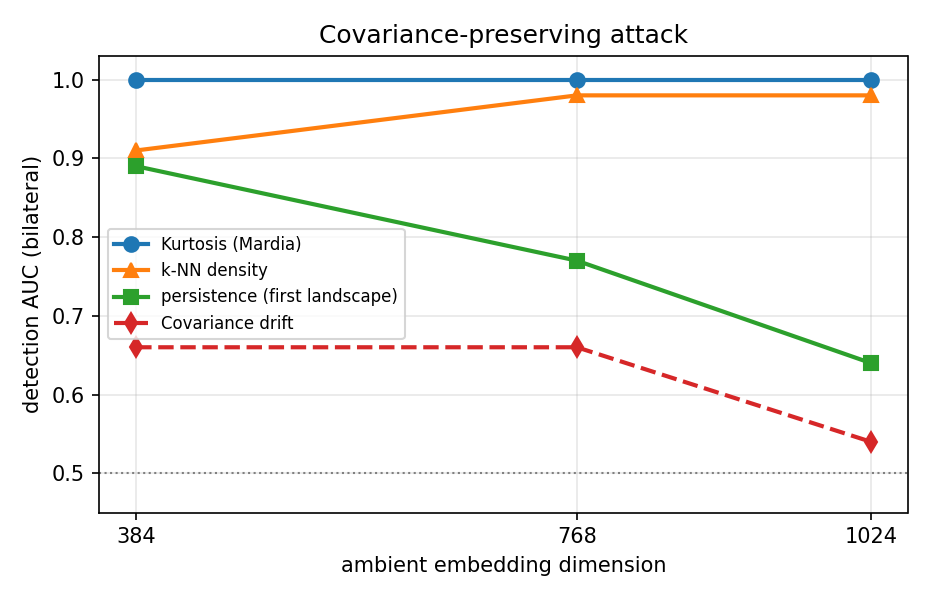}
\caption{Only pure covariance drift is blind; every other cheap detector catches the
covariance-preserving attack. The plotted persistence curve is the \emph{first landscape}, which
degrades with ambient dimension while kurtosis stays at $1.00$; total persistence, read off the
same diagrams, does not ($1.00,0.93,1.00$; Table~\ref{tab:dim}), so this degradation is a property
of the summary and we withdraw it as a statement about persistent homology
(\S\ref{sec:summary}). Intrinsic dimension is stable ($\approx 8/7/9$).}
\label{fig:dim}
\end{figure}

\subsection{The idealised topological niche is also covered}
On a synthetic circle-vs-disk pair with covariance matched by construction --- a purely
topological difference invisible to \textsc{cov-drift} (AUC $0.58$) --- $\Delta_{\mathrm{top}}$ separates
perfectly ($1.00$), but so does $k$-NN density ($1.00$): even the ideal blind spot of
second-order statistics is covered by a cheap density estimator. (Fig.~\ref{fig:shell} shows the
shell/ball geometry underlying the construction, not these detection results.)

\subsection{Kernel two-sample tests, and a genuinely adaptive adversary}\label{sec:kernel}
Our own frontier analysis (\S\ref{sec:theory}) identifies a Gaussian test function of width
$\varepsilon$ as the near-optimal probe for an $\varepsilon$-scale feature --- that is precisely
the RKHS witness of an RBF kernel. We therefore add MMD-RBF, energy distance and a
Kolmogorov--Smirnov test on the Mahalanobis radial law (which subsumes kurtosis) to the battery,
and we strengthen the adversary: rather than merely preserving $(\mu,\Sigma)$, it is
\emph{optimised} against the defender's statistics $\{\mu,\Sigma,k\text{-NN},b\}$ by gradient
descent, never touching persistence.

Two findings, both against our initial thesis. First, MMD's power is entirely a matter of
bandwidth, exactly as the frontier law predicts: on the covariance-preserving attack, MMD-RBF at
the median heuristic is near-blind (AUC $0.60$) and reaches $1.00$ once
$\sigma\le\text{median}/4$. Second, against the genuinely adaptive adversary (residuals
$\lvert\Delta\mu\rvert=0.0000$, $\lvert\Delta\Sigma\rvert_F=0.0002$, $\lvert\Delta b\rvert=0.000$),
every statistic it optimises against is evaded --- \textsc{cov-drift} $0.64$, $k$-NN $0.61$,
kurtosis $0.61$ --- \emph{and persistence does not fill the gap} ($\Delta_{\mathrm{top}}=0.60$,
chance). The detector that does is MMD-RBF at matched bandwidth ($1.00$; KS-radial $0.95$).

The evasion channel is therefore real, but it is closed by a kernel two-sample test, not by
topology. This is the sharpest form of our negative result, and our own theory predicts it: the
optimal test function is Gaussian, so the optimal practical detector is an RBF-kernel test at the
feature scale. Persistence's only structural advantage over it is that a barcode is scale-free,
whereas MMD must choose a bandwidth --- but multi-bandwidth MMD is standard and cheap.

\paragraph{An MMD-aware adversary.} The adversary above never had the winning detector in its
objective --- it optimised against $\{\mu,\Sigma,k\text{-NN},b\}$ and left both persistence and MMD
untouched --- so ``only a kernel test detects'' was, strictly, a statement about a detector the
attacker was not attacking. We therefore added a differentiable MMD term at the matched bandwidth to
the objective. The result is that the attack fails to evade it: MMD still separates at AUC $1.00$
while kurtosis ($0.53$), $k$-NN ($0.57$) and total persistence ($0.55$) are all evaded, and the
attempt breaks the attacker's own second-order cover (\textsc{cov-drift} rises to $1.00$). At this
optimisation budget the constrained problem --- alter the topology while matching mean, covariance,
$k$-NN law, kurtosis \emph{and} the kernel statistic --- appears infeasible. We state the obvious
caveat: this is one optimiser at one budget, not an impossibility, and a better-resourced
MMD-aware adversary is the natural next escalation.

\subsection{The law as a calibration rule: $\sigma^\star\approx\varepsilon$}\label{sec:sigma}
That bandwidth matters is well known \citep{gretton2012kernel}; the median heuristic is a poor
default under local alternatives. The non-trivial content of the law is \emph{which} bandwidth. We
test it with a controlled deformation: replace a mass fraction $f$ of a window by a localised blob
of \emph{known} radius $\varepsilon$, sweep $\sigma$, and record
$\sigma^\star=\arg\max\mathrm{AUC}$.

\paragraph{Identifiability first.} At large $f$ the AUC saturates at $1.00$ over a wide plateau and
the argmax is not defined; an earlier version of this work reported a ratio measured on such a
plateau, which was meaningless. Lowering $f$ to $0.03$--$0.05$ produces a strict interior maximum,
and then
$\sigma^\star/\varepsilon=1.05\pm0.02$ across three values of $f$ --- but a single scale, and a
single run per cell, does not validate a scale law. We therefore fixed one protocol (the same
$f$-grid, acceptance window $\mathrm{AUC}_{\max}\in[0.85,0.995]$, $24$ bandwidths, $14$ windows) and
applied it uniformly to three settings $\times$ three scales, with three repetitions per cell.

\begin{center}
\begin{tabular}{lccc}
\toprule
setting & $\varepsilon/\text{scale}$ & ratios (repetitions) & spread\\
\midrule
bge-1024 / 20NG    & 0.25 & 4.07,\ 0.94,\ 4.06 & 3.13\\
                   & 0.15 & 1.05,\ 0.87,\ 0.89 & 0.18\\
                   & 0.08 & 1.37,\ 1.67,\ 1.17 & 0.50\\
MiniLM-384 / 20NG  & 0.25 & 0.72,\ 1.26,\ 1.03 & 0.54\\
                   & 0.15 & 1.02,\ 1.01,\ 3.15 & 2.14\\
                   & 0.08 & 1.34,\ 1.58        & 0.24\\
bge-1024 / AG News & 0.25 & 0.96,\ 1.15,\ 0.65 & 0.51\\
                   & 0.15 & 1.06,\ 1.06,\ 1.08 & 0.02\\
                   & 0.08 & 2.04,\ 1.65,\ 1.35 & 0.69\\
\bottomrule
\end{tabular}
\end{center}

One repetition (MiniLM-384 at $\varepsilon/\text{scale}=0.08$) produced no interior optimum at any
mass fraction in the grid and is excluded, which is why the count below is $26$ and not $27$.
Over these $n=26$ estimates the median is $1.12$ with interquartile range $[1.01,1.52]$; the full
range is $[0.65,4.07]$. The headline is therefore $\sigma^\star\approx\varepsilon$, an order-unity
constant. The spread, however, is not symmetric noise, and inspecting the sweeps explains it:
\emph{the $\mathrm{AUC}(\sigma)$ curve is multimodal}. In the widest cell (bge-1024/20NG at
$\varepsilon/\text{scale}=0.25$, ratios $4.07,0.94,4.06$) the three repetitions have local maxima at
$\sigma/\varepsilon\approx0.9$ \emph{and} at $\sigma/\varepsilon\approx4$, with essentially equal
AUC ($0.86$ vs $0.86$; $0.97$ vs $0.94$; $0.90$ vs $0.93$); the second peak sits at
$\sigma\approx0.56$, which is $1.01$ times the median pairwise distance --- the global scale of the
cloud. The argmax flips between the deformation scale and the cloud scale from draw to draw.

Two consequences. First, the law's prediction is confirmed more strongly than the median suggests:
a peak near $\sigma\approx\varepsilon$ is present in \emph{every} repetition, including those whose
global argmax lands at the cloud scale. Second, the global argmax is the wrong estimator for this
quantity, and the three values we reported in earlier versions of this work ($0.48$, then $1.05$,
then $1.64$, each from a single run) are explained by which peak happened to win. We quote the
median with its interquartile range, we do not claim a sharp constant, and we flag the local peak
near $\varepsilon$ --- not the global argmax --- as what a practitioner should look for.

\paragraph{Is there a resolution floor?} An earlier single-run sweep produced ratios that happened to increase as $\varepsilon$ decreased,
which raised the worry of a resolution floor:
$\sigma^\star\approx\max(\varepsilon,\delta_n)$ with $\delta_n$ the inter-point spacing, which would
inflate the ratio once $\varepsilon$ approaches the sampling resolution and would bound the rule's
usefulness at small scales. We tested it by descending in $\varepsilon$. In the reduced cloud
($d=9$, $W=200$) the median first-neighbour distance is $\delta_n=0.161$, i.e.\ $0.29$ of the median
pairwise distance, so the three configurations with an identifiable optimum sit at
$\varepsilon/\delta_n=0.86,0.41,0.28$ --- already \emph{below} the inter-point spacing. There
$\sigma^\star$ still tracks $\varepsilon$: fitting $\sigma^\star=c\,\varepsilon$ gives $c=1.27$ with
RMSE $0.0066$, while the floor model $\sigma^\star=\max(c\,\varepsilon,\delta_n)$ fits eleven times
worse (RMSE $0.074$). We therefore find no evidence of a floor down to $\varepsilon\approx0.28\,
\delta_n$, and we withdraw the monotonic reading that suggested one: re-measured, the ratios are
$1.28,1.15,1.44$ --- not monotone, so the earlier trend was run-to-run variation. This is mildly
good news for the rule, since a two-sample statistic aggregates over all pairs and its effective
resolution need not be the nearest-neighbour spacing of a single point. Below
$\varepsilon\approx0.28\,\delta_n$ we could not find an interior optimum at any mass fraction up to
$f=0.35$, so we cannot probe further with this design. (The individual ratios quoted in this
paragraph come from the floor protocol, which used a wider $f$-grid to reach small $\varepsilon$;
the canonical estimate remains the median $1.12$ above. What matters here is the fitted slope
$c=1.27$ versus the floor model, not the individual values.)

\paragraph{Closing the loop.} The rule ``set the bandwidth to the scale of the effect'' presupposes
estimating that scale. We estimate $\varepsilon$ from data alone, via the radius at which the
relative excess of $r$-neighbours between attacked and healthy windows peaks, then use
$\sigma_{\mathrm{pred}}=\hat\varepsilon/2$. The estimator is crude --- it overshoots $\varepsilon$ by a factor $1.6$--$3.0$ --- and we must be
explicit about why the loop nevertheless closes: since the canonical estimate is
$\sigma^\star\approx1.1\,\varepsilon$ while $\hat\varepsilon\approx1.6$--$3.0\,\varepsilon$, using
$\sigma_{\mathrm{pred}}=\hat\varepsilon/2$ succeeds through a partial compensation of two errors
rather than because the rule is applied as stated. Both configurations in which we closed the loop
were on bge-1024/20NG; we have not repeated the closure on the other two settings, where the
estimator bias may differ. The factor
$1/2$ is an estimator correction, not part of the law. With that caveat the predicted bandwidth
does reach $\mathrm{AUC}=0.95$ in both configurations tested (against sweep maxima $0.99$ and
$0.83$); debiasing the scale estimator, so that $\sigma=\hat\varepsilon$ can be used directly, is
the obvious next step and we have not done it.

\subsection{Operating points, cost, and sensitivity to the reduction dimension}\label{sec:op}
Our pre-registration promised recall at a production false-positive regime; reporting AUC alone
was a violation, and the operating-point view is considerably harsher than AUC suggests. Calibrating
each detector's threshold on $2000$ healthy windows and measuring recall on $200$
covariance-preserving attacks (Table~\ref{tab:op}), $\Delta_{\mathrm{top}}$ attains recall $0.06$
at $\mathrm{FPR}=5\%$ and $0.00$ at both $1\%$ and $0.5\%$ --- an AUC near $0.7$ translates into a
detector that is unusable as a circuit breaker --- while kurtosis, MMD and KS-radial reach $1.00$
at every operating point. The $1\%$ point now rests on $20$ tail windows rather than on one.

\begin{table}[t]\centering
\caption{Recall at fixed FPR (covariance-preserving attack), and cost per $200$-point window at
reduction dimension $9$.}
\label{tab:op}
\begin{tabular}{lccrr}
\toprule
detector & recall@FPR $5\%$ & recall@FPR $1\%$ & ms/window & $\times$ kurtosis\\
\midrule
$\Delta_{\mathrm{top}}$ (first landscape) & 0.06 & 0.00 & 4.64 & 116\\
total persistence & 0.94 & 0.75 & 4.64 & 116\\
$k$-NN & 0.78 & 0.64 & 0.46 & 12\\
kurtosis & \textbf{1.00} & \textbf{1.00} & 0.04 & 1\\
MMD-RBF (matched $\sigma$) & \textbf{1.00} & \textbf{1.00} & 1.01 & 25\\
KS-radial & \textbf{1.00} & \textbf{1.00} & 0.15 & 4\\
\bottomrule
\end{tabular}
\end{table}

\paragraph{A bandwidth-free competitor.} One line of Table~\ref{tab:op} deserves comment, because
it sits awkwardly with our own pitch. KS-radial --- a Kolmogorov--Smirnov test on the Mahalanobis
radial law --- reaches $1.00$ at both operating points for $0.15$\,ms, and has no bandwidth to
choose at all. If it were uniformly best, ``the law tells you how to set your bandwidth'' would be
advice for a problem one can avoid. Two things separate it. It is a \emph{radial} statistic, so it
belongs to the family the adaptive adversary has already shown it can absorb --- it drops to $0.95$
there, where MMD stays at $1.00$ --- and we expect a radially-aware attacker to close that gap as it
closed kurtosis'. And it is not the detector the law explains: the upper-bound construction
identifies Gaussian probes, hence kernel tests, and says nothing about rank statistics. KS-radial is
an excellent cheap default; the scale law is about the detector that survives adaptation.

The honest cost comparison is not against the weakest baseline. Kurtosis is essentially free
($0.04$\,ms) and works except against an adaptive adversary; MMD costs $25\times$ kurtosis
($1.01$\,ms) and works in every setting we tried; persistence costs $116\times$ kurtosis
($4.64$\,ms) and never reaches usable recall. Two caveats we state rather than hide: the ratios are
measured at a single operating size ($W=200$, reduction dimension $9$), and Rips is superlinear in
$W$ while MMD is quadratic, so the ratio is regime-dependent; and the PCA reduction is charged to
none of the detectors although only persistence requires it.

\paragraph{The reduction dimension is not neutral --- and our choice was unfavourable to the
landscape summary.}
Sweeping the PCA dimension on the same attack, $\Delta_{\mathrm{top}}$ rises monotonically with
dimension: AUC $0.51,\,0.72,\,0.93,\,1.00,\,1.00$ at dimensions $5,9,15,30,50$. For \emph{this summary}, our main setting (the intrinsic dimension,
$\approx9$) is close to its worst case, and a reader is entitled to know it --- but
\S\ref{sec:summary} shows the effect largely belongs to the landscape: total persistence, read off
the same diagrams, already reaches $0.97$ at dimension $9$ and saturates by $15$. We therefore tested whether the high-dimensional $H_1$ signal is topological at all, by comparing
each window against a covariance-matched Gaussian null: the median bootstrap $p$-value is
$0.17,0.25,0.23,0.53$ at dimensions $9,15,30,50$ --- never significant, but never decisively
absent either (this test has low power at our sample size). We consequently do \emph{not} claim
the high-dimensional signal is an artifact, and we soften \S\ref{sec:reduce} accordingly: reduction
to the intrinsic dimension is what our null-model analysis supports, not a requirement of the
detection task. The dilemma this paragraph stages --- detection only in a regime we cannot
characterise --- is largely dissolved by \S\ref{sec:summary}: with total persistence the detector
already works at the intrinsic dimension, so what remains across the whole sweep is a cost gap
against statistics that reach $1.00$ at every dimension.

\subsection{The persistence summary matters more than the filtration}\label{sec:summary}
Our headline detector, $\Delta_{\mathrm{top}}$, is the $L^1$ distance between first persistence
landscapes. That is one vectorisation among several, and the choice turns out to dominate every
other design decision we made on the topological side. On the covariance-preserving attack (bge-1024, 20\,Newsgroups, reduction dimension $9$,
$W=200$; the same setting as Table~\ref{tab:op}):

\begin{center}
\begin{tabular}{lccc}
\toprule
persistence summary & AUC & recall@FPR $5\%$ & recall@FPR $1\%$\\
\midrule
first landscape, $L^1$ ($\Delta_{\mathrm{top}}$) & 0.75 & 0.06 & 0.00\\
\textbf{total persistence} & \textbf{0.99} & \textbf{0.94} & \textbf{0.75}\\
persistence entropy & 0.88 & 0.71 & 0.42\\
bottleneck to reference & 0.81 & 0.55 & 0.28\\
\bottomrule
\end{tabular}
\end{center}

Total persistence moves the detector from unusable ($0.00$ recall at $\mathrm{FPR}=1\%$) to
usable ($0.75$), while DTM weighting --- a change of \emph{filtration} rather than of summary ---
does not help under either summary ($0.83$ with total persistence, versus $1.00$ for Euclidean
Rips).

\paragraph{Re-running the rest of the paper with the better summary.} Because the diagram is
already computed, substituting the summary is nearly free, and it revises three claims we would
otherwise have made:
\begin{itemize}
\item \emph{Ambient dimension.} The landscape degrades with ambient dimension ($0.90,0.63,0.75$ at
$384,768,1024$); total persistence does not ($1.00,0.93,1.00$). The ``gap widens with dimension''
reading is a property of the landscape, and we withdraw it.
\item \emph{Reduction dimension.} The landscape rises steeply with reduction dimension
($0.55\to1.00$ over dimensions $5$ to $50$); total persistence is already at $0.97$ at our main
setting of $9$ and saturates by $15$. Our setting is therefore \emph{not} close to persistence's
worst case once the summary is chosen sensibly.
\item \emph{Significance.} With bootstrap intervals, total persistence reaches
$0.97\,[0.91,1.00]$ against kurtosis' $1.00\,[1.00,1.00]$: the intervals meet, so the
``statistically significant domination'' we could claim against the landscape does \emph{not} hold
against the better summary. What remains is a cost gap, not a power gap.
\end{itemize}

\paragraph{Why the aggregated summary wins --- and why that supports the law.} This is not merely a
concession. Total persistence, $\sum(\text{death}-\text{birth})$, is a single scalar aggregated over
all scales: structurally a low-order readout of the radial profile, much like a moment. The first
landscape is by construction \emph{localised in scale}. Our law says that a coarse, single-scale
feature is certified at low moment order, and that fine or multi-scale structure is what costs;
it therefore predicts that on coarse deformations the aggregated summary should beat the
scale-localised one. That is what we observe. The topological side of the experiment instantiates
the law rather than contradicting it: among topological summaries, the one that behaves most like
a moment is the one that works, in exactly the regime where the law says a moment suffices. The conclusion that
survives the better summary is narrower and cost-based: kurtosis reaches $1.00$ at both operating
points for $1/116$ of the compute, and against the adaptive adversary of \S\ref{sec:kernel} total
persistence (recall $0.20$ at $\mathrm{FPR}=5\%$, $0.05$ at $1\%$) and kurtosis ($0.15$, $0.03$)
fail together, leaving only the
bandwidth-matched kernel test.

\subsection{Robust persistence (DTM) and streaming detection delay}\label{sec:dtm}
Two gaps a reviewer would rightly flag. First, we cite DTM and sparse/robust persistence but had
not evaluated them, so our conclusion could only concern the vanilla detector. Implementing the
$p=\infty$ weighted (DTM) Rips filtration, $f(i)=\mathrm{DTM}_m(i)$,
$f(i,j)=\max(f(i),f(j),d_{ij}/2)$, on the covariance-preserving attack gives AUC $0.53$ --- worse
than vanilla Rips ($0.75$) --- and recall $0.00$ at $\mathrm{FPR}=5\%$. The robust variant does not
rescue persistence here.

Second, AUC per window is the wrong metric for a monitor; the changepoint/SPC literature asks for
detection delay at a fixed in-control average run length. Calibrating each detector for
$\mathrm{ARL}_0=100$ windows and measuring the delay after a change (Table~\ref{tab:arl}), kurtosis,
MMD and KS-radial alarm on the \emph{first} window in $100\%$ of streams, while the landscape
summary alarms in $40\%$ of streams and its DTM variant in none. Here too the summary is decisive:
total persistence also alarms on the first window in $100\%$ of streams, so the streaming failure
belongs to the landscape and not to persistent homology.

\begin{table}[t]\centering
\caption{Detection delay after change at $\mathrm{ARL}_0=100$ windows (25 streams, observation
horizon 20 windows; delay $21$ means no alarm; $\mathrm{ARL}_0$ calibrated on $120$ windows;
Wilson intervals on the detection rate).}
\label{tab:arl}
\begin{tabular}{lcc}
\toprule
detector & median delay & streams detected\\
\midrule
$\Delta_{\mathrm{top}}$, first landscape (Rips) & 21 & 40\% $\pm$ 19\\
$\Delta_{\mathrm{top}}$, first landscape (DTM) & 21 & 0\% $\pm$ 0\\
\textbf{total persistence} (Rips) & \textbf{1} & \textbf{100\%}\\
kurtosis & \textbf{1} & \textbf{100\%}\\
MMD-RBF ($\sigma=$med$/4$) & \textbf{1} & \textbf{100\%}\\
KS-radial & \textbf{1} & \textbf{100\%}\\
\bottomrule
\end{tabular}
\end{table}

\subsection{Robustness}
On two corpora, with $95\%$ bootstrap CIs (Table~\ref{tab:robust}), the kurtosis interval clears
the landscape interval on the covariance-preserving attack, and does so on both corpora, so that
domination is
significant \emph{against that summary}. Against total persistence it is not
($0.97\,[0.91,1.00]$ vs $1.00\,[1.00,1.00]$, \S\ref{sec:summary}); what is robust across corpora is
the cost gap.

\begin{table}[t]\centering
\caption{Bilateral AUC with $95\%$ bootstrap CIs, bge-1024, covariance-preserving attack. The two persistence columns share their diagrams and differ only in the summary.}
\label{tab:robust}
\begin{tabular}{lccccc}
\toprule
corpus & landscape & total pers. & \textsc{cov-drift} & $k$-NN & Kurtosis\\
\midrule
20NG (forums)  & 0.70\,[0.53,\,0.85] & 0.97\,[0.91,\,1.00] & 0.52\,[0.50,\,0.71] & 0.97 & \textbf{1.00\,[1.00,\,1.00]}\\
AG News (press)& 0.81\,[0.67,\,0.94] & \textbf{1.00\,[1.00,\,1.00]} & 0.53\,[0.50,\,0.72] & 0.82 & \textbf{1.00\,[1.00,\,1.00]}\\
\bottomrule
\end{tabular}
\end{table}

\section{Discussion and limitations}\label{sec:disc}
Observation~1 concerns spherically and elliptically symmetric families and variance-matched
bimodal connectivity, where PH is essentially a radial or modality readout; it
explains the empirical redundancy for canonical changes but not for arbitrary clouds
(\S\ref{sec:false}). \textbf{Mixtures.} For a mixture of elliptical components the picture holds
componentwise, but with a caveat we make explicit: matching \emph{global} moments only weakly
constrains \emph{per-component} topology, since mass can be redistributed across components ---
an optimisation-based trade-off on mixtures did not yield clean curves, and the componentwise
statement is what the controlled construction and the moment decomposition (law of total
covariance) support. \textbf{Curved manifolds.} The frontier bound $N^*\ge\log(1/f)/(2\varepsilon)$ is stated in terms of
feature scale and mass, so it applies to any topology once those are defined; the radial,
axis-separable and group-symmetric cases reduce explicitly (a torus of revolution, for instance,
reduces to its $(\rho,z)$ profile). What we do \emph{not} prove is that an arbitrary curved
manifold with intrinsic dimension $\ll$ ambient --- the real embedding regime --- admits such a
reduction with controlled $(\varepsilon,f)$; we conjecture it does, via a local-to-global (nerve)
argument. An adversarial search for a persistence-visible counterexample to the law found none
(symmetric topology reduces; linked cycles share Betti numbers and are invisible to ordinary
persistence; geometric complexity of a single feature does not inflate $N^*$).
Other limitations: a single reduction method (PCA; UMAP untested); we do not test $1536$-d
embeddings and do not extrapolate to them from three ambient dimensions; moderate $N$ (CIs
reported); the empirical study uses text corpora only; and four persistence summaries (first landscape, total persistence, persistence entropy,
bottleneck to reference) over two filtrations (Euclidean Rips, DTM-weighted); sparse
approximations are not evaluated.

\section{Conclusion}
Certifying a feature of spatial scale $\varepsilon$ and mass $f$ from moment information costs
polynomial degree $N^*\ge\log(1/f)/(2\varepsilon)$: the cost is governed by fineness, not by count.
The bound is one-sided --- it never certifies that a given order suffices, and our annulus matches
every moment through order four while carrying $H_1$ --- so what it licenses is a prediction about
\emph{which} probe to use, not a claim that any particular statistic is enough.

That prediction is the paper's usable output. The rate is attained by Gaussian test functions, i.e.\
by an RBF kernel witness, so the bandwidth of an MMD test should track the feature scale; we measure
$\sigma^\star\approx\varepsilon$ (median $1.12$, IQR $[1.01,1.52]$ over three settings and three
scales) while also finding that the argmax estimator is noisy enough that single-run ratios should
not be read as a constant, and a bandwidth set from a data-driven scale estimate reaches
AUC $\ge0.95$. Against an adversary optimised against $\{\mu,\Sigma,k\text{-NN},b\}$, all of
those statistics are evaded and the bandwidth-matched kernel test is the only detector left standing.

On persistent homology our verdict is deliberately narrow, because the evidence is. The choice of
summary dominates every other design decision: total persistence attains recall $0.75$ at
$\mathrm{FPR}=1\%$ where the first landscape attains $0.00$, and once that choice is made, several
apparently damning findings --- degradation with ambient dimension, sensitivity to the reduction
dimension, statistically significant domination by kurtosis --- turn out to be properties of the
landscape rather than of persistence. What survives is not a power gap but a cost gap: on these
tasks persistence matches the cheap statistics at roughly $116\times$ their compute, and fails
alongside them against an adaptive adversary. And the summary that works is the one that reads like
a moment, which is what the law predicts. We therefore do not conclude that topological summaries
are useless; we conclude that on coarse, single-scale monitoring they are dominated on cost by a
kernel test whose bandwidth the law tells you how to set, and that a case for them must be made on
fine or multi-scale structure, which this task does not present.

\bibliographystyle{plainnat}
\bibliography{refs}
\end{document}